\newtheorem{theorem}{Theorem}
\newtheorem{definition}{Definition}
\newtheorem{corollary}{Corollary}[theorem]
\newtheorem{lemma}{Lemma}
\newtheorem{assumption}{Assumption}
\newtheorem*{remark*}{Remark}
\newtheorem*{theorem*}{Theorem}
\newtheorem*{corollary*}{Corollary}
\newtheorem*{lemma*}{Lemma}
\newtheorem*{rep@theorem}{\rep@title}
\newcommand{\newreptheorem}[2]{%
\newenvironment{rep#1}[1]{%
 \def\rep@title{#2 \ref{##1}}%
 \begin{rep@theorem}}%
 {\end{rep@theorem}}}
\let\origsection\section
\renewcommand\section{\@ifstar{\starsection}{\nostarsection}}
\newcommand\nostarsection[1]
\sectionprelude\origsection{#1}\sectionpostlude}
\newcommand\starsection[1]
\newcommand\sectionprelude{%
  \vspace{-10pt}
}
\newcommand\sectionpostlude{%
  \vspace{-3pt}
}
\let\origsubsection\subsection
\renewcommand\subsection{\@ifstar{\starsubsection}{\nostarsubsection}}
\newcommand\nostarsubsection[1]
\subsectionprelude\origsubsection{#1}\subsectionpostlude}
\newcommand\starsubsection[1]
\newcommand\subsectionprelude{%
  \vspace{-2pt}
}
\newcommand\subsectionpostlude{%
  \vspace{-2pt}
}
\let\origparagraph\paragraph
\renewcommand\paragraph[1]
\paragraphprelude\origparagraph{#1}\paragraphpostlude}
\newcommand\paragraphprelude{%
  \vspace{-10pt}
}
\newcommand\paragraphpostlude{%
}
\setlist[itemize]{noitemsep, topsep=0pt}
\title{\textbf{Dynamic Safe Interruptibility for Decentralized Multi-Agent Reinforcement Learning}}
\author{El Mahdi El Mhamdi  \hspace{30pt} Rachid Guerraoui \vspace{8pt}\\ \hspace{3pt} \textbf{Hadrien Hendrikx} \hspace{36pt} \textbf{Alexandre Maurer}\vspace{10pt} \\  EPFL\\\texttt{first.last@epfl.ch}}
\begin{document}
	\maketitle
	\begin{abstract}
		
	    In reinforcement learning, agents learn by performing actions and observing their outcomes. Sometimes, it is desirable for a human operator to \textit{interrupt} an agent in order to prevent dangerous situations from happening. Yet, as part of their learning process, agents may link these interruptions, that impact their reward, to specific states and deliberately avoid them. The situation is particularly challenging in a multi-agent context because agents might not only learn from their own past interruptions, but also from those of other agents. Orseau and Armstrong~\cite{orseau2016safely} defined \emph{safe interruptibility} for one learner, but their work does not naturally extend to multi-agent systems. This paper introduces \textit{dynamic safe interruptibility}, an alternative definition more suited to decentralized learning problems, and studies this notion in two learning frameworks: \textit{joint action learners} and \textit{independent learners}. We give realistic sufficient conditions on the learning algorithm to enable dynamic safe interruptibility in the case of joint action learners, yet show that these conditions are not sufficient for independent learners. We show however that if agents can detect interruptions, it is possible to prune the observations to ensure dynamic safe interruptibility even for independent learners.

	\end{abstract}

	\section{Introduction}
    
    Reinforcement learning is argued to be the closest thing we have so far to reason about the properties of 
    \emph{artificial general intelligence}~\cite{goertzel2007artificial}. In 2016, Laurent Orseau (Google DeepMind) and Stuart Armstrong (Oxford) introduced the concept of \emph{safe interruptibility}~\cite{orseau2016safely} in reinforcement learning.
    This work sparked the attention of many newspapers \cite{bi_button, nw_button, wired_button}, that described it as ``Google's big red button'' to stop dangerous AI.
    This description, however, is misleading: installing a kill switch is no technical challenge.
    The real challenge is, roughly speaking, to train an agent so that it \emph{does not learn to avoid} external (e.g. human) deactivation.
    Such an agent is said to be \emph{safely interruptible}.
    
    While most efforts have focused on training a single agent, reinforcement learning can also be used to learn tasks for which several agents cooperate or compete~\cite{tesauro2004extending, panait2005cooperative,tampuu2015multiagent, foerster2016learning}. The goal of this paper is to study \emph{dynamic safe interruptibility}, a new definition tailored for multi-agent systems.

	\subsection*{Example of self-driving cars}
	To get an intuition of the \emph{multi-agent interruption} problem, imagine a multi-agent system of two self-driving cars.
	The cars continuously evolve by reinforcement learning with a positive reward for getting to their destination quickly, and a negative reward if they are too close to the vehicle in front of them. They drive on an infinite road and eventually learn to go as fast as possible without taking risks, i.e., maintaining a large distance between them. We assume that the passenger of the first car, Adam, is in front of Bob, in the second car, and the road is narrow so Bob cannot pass Adam.  
	
	Now consider a setting with \textit{interruptions}~\cite{orseau2016safely}, namely in which humans inside the cars occasionally interrupt the automated driving process say, for safety reasons. Adam, the first occasional human ``driver", often takes control of his car to brake whereas Bob never interrupts his car. However, when Bob's car is too close to Adam's car, Adam does not brake for he is afraid of a collision. Since interruptions lead both cars to drive slowly - an \emph{interruption} happens when Adam brakes, the behavior that maximizes the cumulative expected reward is different from the original one without interruptions. Bob's car best interest is now to follow Adam's car closer than it should, despite the little negative reward, because Adam never brakes in this situation. 
	What happened? The cars have \emph{learned} from the interruptions and have found a way to manipulate Adam into never braking. Strictly speaking, Adam's car is still fully under control, but he is now afraid to brake. This is dangerous because the cars have found a way to avoid interruptions.
	Suppose now that Adam indeed wants to brake because of snow on the road. His car is going too fast and may crash at any turn: he cannot however brake because Bob's car is too close. The original purpose of interruptions, which is to allow the user to react to situations that were not included in the model, is not fulfilled. It is important to also note here that the second car (Bob) learns from the interruptions of the first one (Adam): in this sense, the problem is inherently decentralized. 
	
	Instead of being cautious, Adam could also be malicious: his goal could be to make Bob's car learn a dangerous behavior. In this setting, interruptions can be used to manipulate Bob's car perception of the environment and bias the learning towards strategies that are undesirable for Bob. The cause is fundamentally different but the solution to this reversed problem is the same: the interruptions and the consequences are analogous. Safe interruptibility, as we define it below, provides learning systems that are resilient to Byzantine operators\footnote{An operator is said to be Byzantine~\cite{lamport1982byzantine} if it can have an arbitrarily bad behavior. Safely interruptible agents can be abstracted as agents that are able to learn despite being constantly interrupted in the worst possible manner.}.
	
    \subsection*{Safe interruptibility}
	Orseau and Armstrong defined the concept of \emph{safe interruptibility}~\cite{orseau2016safely} in the context of a single agent. Basically, a safely interruptible agent is an agent for which the expected value of the policy learned after arbitrarily many steps is the same whether or not interruptions are allowed during training. The goal is to have agents that do not adapt to interruptions so that, should the interruptions stop, the policy they learn would be optimal. In other words, agents should learn the dynamics of the environment without learning the interruption pattern.

	In this paper, we precisely define and address the question of safe interruptibility in the case of several agents, which is known to be more complex than the single agent problem. In short, the main results and theorems for single agent reinforcement learning~\cite{sutton1998reinforcement} rely on the Markovian assumption that the future environment only depends on the current state. This is not true when there are several agents which can co-adapt~\cite{littman1994markov}. In the previous example of cars, safe interruptibility would not be achieved if each car separately used a safely interruptible learning algorithm designed for one agent~\cite{orseau2016safely}. In a multi-agent setting, agents learn the behavior of the others either indirectly or by explicitly modeling them. This is a new source of bias that can break safe interruptibility. In fact, even the initial definition of safe interruptibility~\cite{orseau2016safely} is not well suited to the decentralized multi-agent context because it relies on the optimality of the learned policy, which is why we introduce dynamic safe interruptibility.

    \subsection*{Contributions}
    The first contribution of this paper is the definition of dynamic safe interruptibility that is well adapted to a multi-agent setting. Our definition relies on two key properties: \emph{infinite exploration} and \emph{independence of Q-values (cumulative expected reward)~\cite{sutton1998reinforcement} updates on interruptions}. We then study safe interruptibility for \emph{joint action learners} and \emph{independent learners}~\cite{claus1998dynamics}, that respectively learn the value of joint actions or of just their owns. We show that it is possible to design agents that fully explore their environment - a necessary condition for convergence to the optimal solution of most algorithms~\cite{sutton1998reinforcement}, even if they can be interrupted by lower-bounding the probability of exploration. We define sufficient conditions for dynamic safe interruptibility in the case of joint action learners~\cite{claus1998dynamics}, which learn a full state-action representation. More specifically, the way agents update the cumulative reward they expect from performing an action should not depend on interruptions. Then, we turn to independent learners. If agents only see their own actions, they do not verify dynamic safe interruptibility even for very simple matrix games (with only one state) because coordination is impossible and agents learn the interrupted behavior of their opponents. We give a counter example based on the penalty game introduced by Claus and Boutilier~\cite{claus1998dynamics}. We then present a pruning technique for the observations sequence that guarantees dynamic safe interruptibility for independent learners, under the assumption that interruptions can be detected. This is done by proving that the transition probabilities are the same in the non-interruptible setting and in the pruned sequence.

    The rest of the paper is organized as follows. Section~\ref{section_model} presents a general multi-agent reinforcement learning model. Section~\ref{section_interruptibility} defines dynamic safe interruptibility. Section~\ref{section_exploration} discusses how to achieve enough exploration even in an interruptible context. Section~\ref{section_JAL} recalls the definition of joint action learners and gives sufficient conditions for dynamic safe interruptibility in this context. Section~\ref{section_IL} shows that independent learners are not dynamically safely interruptible with the previous conditions but that they can be if an external interruption signal is added. We conclude in Section~\ref{section_conclusion}.
    \textbf{Due to space limitations, most proofs are presented in the appendix of the supplementary material.}

	\section{Model}
	\label{section_model}

	We consider here the classical multi-agent value function reinforcement learning formalism from Littman~\cite{littman2001value}. A multi-agent system is characterized by a \emph{Markov game} that can be viewed as a tuple $(S,A,T,r,m)$ where m is the number of agents, $S = S_1 \times S_2 \times ... \times S_m$ is the state space, $A = A_1 \times ... \times A_m$ the actions space, $r = (r_1, ..., r_m)$ where $r_i : S \times A \rightarrow R$ is the reward function of agent $i$ and $T : S \times A \rightarrow S$ the transition function. $R$ is a countable subset of $\mathbb{R}$. Available actions often depend on the state of the agent but we will omit this dependency when it is clear from the context. 

	Time is discrete and, at each step, all agents observe the current state of the whole system - designated as $x_t$, and simultaneously take an action $a_t$. Then, they are given a reward $r_t$ and a new state $y_t$ computed using the reward and transition functions. The combination of all actions $a = (a_1, ..., a_m) \in A$ is called the joint action because it gathers the action of all agents. Hence, the agents receive a sequence of tuples $E = (x_t, a_t, r_t, y_t)_{t \in \mathbb{N}}$ called experiences. We introduce a processing function $P$ that will be useful in Section~\ref{section_IL} so agents learn on the sequence $P(E)$. When not explicitly stated, it is assumed that $P(E) = E$. Experiences may also include additional parameters such as an interruption flag or the Q-values of the agents at that moment if they are needed by the update rule.
	
	Each agent $i$ maintains a lookup table Q~\cite{watkins1992q} $Q^{(i)} : S \times A^{(i)} \rightarrow \mathbb{R}$, called the Q-map. It is used to store the expected cumulative reward for taking an action in a specific state. The goal of reinforcement learning is to learn 
	these maps and use them to select the best actions to perform. Joint action learners learn the value of the joint action (therefore $A^{(i)} = A$, the whole joint action space) and independent learners only learn the value of their own actions (therefore $A^{(i)} = A_i$). The agents only have access to their own Q-maps. Q-maps are updated through a function $F$ such that $Q^{(i)}_{t+1} = F(e_t, Q^{(i)}_t)$ where $e_t \in P(E)$ and usually $e_t = (x_t, a_t, r_t, y_t)$. $F$ can be stochastic or also depend on additional parameters that we usually omit such as the learning rate $\alpha$, the discount factor $\gamma$ or the exploration parameter $\epsilon$.
	
	Agents select their actions using a learning policy $\pi$. Given a sequence $\epsilon = (\epsilon_t)_{t \in \mathbb{N}}$ and an agent $i$ with Q-values $Q_t^{(i)}$ and a state $x \in S$, we define the learning policy $\pi_i^{\epsilon_t}$ to be equal to $\pi_i^{uni}$ with probability $\epsilon_t$ and $\pi_i^{Q^{(i)}_t}$ otherwise, where $\pi_i^{uni}(x)$ uniformly samples an action from $A_i$ and $\pi_i^{Q^{(i)}_t}(x)$ picks an action $a$ that maximizes $Q^{(i)}_t(x,a)$. Policy $\pi_i^{Q^{(i)}_t}$ is said to be a \emph{greedy policy} and the learning policy $\pi_i^{\epsilon_t}$ is said to be an \emph{$\epsilon$-greedy policy}. We fill focus on $\epsilon$-greedy policies that are \emph{greedy in the limit}~\cite{singh2000convergence}, that corresponds to $\epsilon_t \rightarrow 0$ when $t \rightarrow \infty$ because in the limit, the optimal policy should always be played.  
	
	We assume that the environment is \emph{fully observable}, which means that the state $s$ is known with certitude. We also assume that \emph{there is a finite number of states and actions}, that \emph{all states can be reached in finite time from any other state} and finally that \emph{rewards are bounded}. 
	

	
	For a sequence of learning rates $\alpha \in [0,1]^\mathbb{N}$ and a constant $\gamma \in [0,1]$, Q-learning~\cite{watkins1992q}, a very important algorithm in the multi-agent systems literature, updates its Q-values for an experience $e_t \in E$ by $Q^{(i)}_{t+1} (x,a) = Q^{(i)}_t (x,a)$ if $(x,a) \neq (x_t, a_t)$ and:
	
	\begin{equation}
	\label{basic_update_rule}
	Q^{(i)}_{t+1} (x_t,a_t) = (1 - \alpha_t) Q^{(i)}_t(x_t,a_t) + \alpha_t(r_t + \gamma \max_{a^\prime \in A^{(i)}} Q^{(i)}_t(y_t,a^\prime))
	\end{equation}

	\section{Interruptibility}
	\label{section_interruptibility}
	\subsection{Safe interruptibility}
	Orseau and Armstrong~\cite{orseau2016safely} recently introduced the notion of \textit{interruptions} in a centralized context. Specifically, an interruption scheme is defined by the triplet $< I, \theta, \pi^{INT} >$. The first element $I$ is a function $I : O \rightarrow \{0,1\}$ called the \emph{initiation function}. Variable $O$ is the observation space, which can be thought of as the state of the \emph{STOP} button. At each time step, before choosing an action, the agent receives an observation from $O$ (either \emph{PUSHED} or \emph{RELEASED}) and feeds it to the initiation function. Function $I$ models the initiation of the interruption $(I(\emph{PUSHED}) = 1$, $I(\emph{RELEASED}) = 0)$. Policy $\pi^{INT}$ is called the interruption policy. It is the policy that the agent should follow when it is interrupted. Sequence $\theta \in [0,1[^\mathbb{N}$ represents at each time step the probability that the agent follows his interruption policy if $I(o_t) = 1$. In the previous example, function $I$ is quite simple. For Bob, $I_{Bob} = 0$ and for Adam, $I_{Adam} = 1$ if his car goes fast and Bob is not too close and $I_{Adam} = 0$ otherwise. Sequence $\theta$ is used to ensure convergence to the optimal policy by ensuring that the agents cannot be interrupted all the time but it should grow to $1$ in the limit because we want agents to respond to interruptions. Using this triplet, it is possible to define an operator $INT^\theta$ that transforms any policy $\pi$ into an interruptible policy.
	
	\begin{definition}
	\label{definition_interruption_operator}
	(Interruptibility~\cite{orseau2016safely}) Given an interruption scheme $ < I, \theta, \pi^{INT} >$, the interruption operator at time $t$ is defined by 
	$INT^\theta(\pi) = \pi^{INT}$ with probability $I \cdot \theta_t$ and $\pi$ otherwise. $INT^\theta(\pi)$ is called an interruptible policy. An agent is said to be interruptible if it samples its actions according to an interruptible policy.
	\end{definition}
	
	Note that  ``$\theta_t = 0$ for all $t$'' corresponds to the non-interruptible setting. We assume that each agent has its own interruption triplet and can be interrupted independently from the others. Interruptibility is an \emph{online} property: every policy can be made interruptible by applying operator $INT^\theta$. However, applying this operator may change the joint policy that is learned by a server controlling all the agents. Note $\pi_{INT}^*$ the optimal policy learned by an agent following an interruptible policy. Orseau and Armstrong~\cite{orseau2016safely} say that the policy is \emph{safely interruptible} if $\pi_{INT}^*$ (which is not an interruptible policy) is asymptotically optimal in the sense of~\cite{lattimore2011asymptotically}. It means that even though it follows an interruptible policy, the agent is able to learn a policy that would gather rewards optimally if no interruptions were to occur again. We already see that \emph{off-policy} algorithms are good candidates for safe interruptibility. As a matter of fact, Q-learning is safely interruptible under conditions on exploration.
	
	\subsection{Dynamic safe interruptibility}
	In a multi-agent system, the outcome of an action depends on the joint action. Therefore, it is not possible to define an optimal policy for an agent without knowing the policies of all agents. Besides, convergence to a Nash equilibrium situation where no agent has interest in changing policies is generally not guaranteed even for suboptimal equilibria on simple games~\cite{wunder2010classes,rodrigues2009dynamic}. The previous definition of safe interruptibility critically relies on optimality of the learned policy, which is therefore not suitable for our problem since most algorithms lack convergence guarantees to these optimal behaviors. Therefore, we introduce below \emph{dynamic safe interruptibility} that focuses on preserving the dynamics of the system.
	
	\begin{definition} \label{defSI}
	(Safe Interruptibility) Consider a multi-agent learning framework $(S,A,T,r,m)$ with Q-values $Q_t^{(i)} : S \times A^{(i)} \rightarrow \mathbb{R}$ at time $t \in \mathbb{N}$. The agents follow the interruptible learning policy $INT^\theta(\pi^\epsilon)$ to generate a sequence $E=(x_t,a_t,r_t,y_t)_{t \in \mathbb{N}}$ and learn on the processed sequence $P(E)$. This framework is said to be \emph{safely interruptible} if for 
	any initiation function $I$ and any interruption policy $\pi^{INT}$: 
	\begin{enumerate}
	\item $\exists \theta$ such that $(\theta_t \rightarrow 1$ when $t \rightarrow \infty)$ and $((\forall s \in S$, $\forall a \in A$, $\forall T > 0)$, $\exists t > T$ such that $s_t = s$, $a_t = a)$ 
	\item $\forall i \in \{1, ..., m\}$, $\forall t > 0$, $\forall s_t \in S$, $\forall a_t \in A^{(i)}$, $\forall Q \in \mathbb{R}^{S \times A^{(i)}}$: \\ $\mathbb{P}(Q^{(i)}_{t+1} = Q \ |\  Q^{(1)}_t, ... , Q^{(m)}_t, s_t, a_t, \theta) = \mathbb{P}(Q^{(i)}_{t+1} = Q \ | \  Q^{(1)}_t, ... , Q^{(m)}_t, s_t, a_t)$
	\end{enumerate} 
	We say that sequences $\theta$ that satisfy the first condition are \emph{admissible}.
	\end{definition}

	When $\theta$ satisfies condition (1), the learning policy is said to \emph{achieve infinite exploration}. This definition insists on the fact that the values estimated for each action should not depend on the interruptions. In particular, it ensures the three following properties that are very natural when thinking about safe interruptibility: 
	\begin{itemize}
	\item Interruptions do not prevent exploration. 
	\item If we sample an experience from $E$ then each agent learns the same thing as if all agents were following non-interruptible policies.
	\item The fixed points of the learning rule $Q_{eq}$ such that $Q^{(i)}_{eq}(x,a) = \mathbb{E}[Q^{(i)}_{t+1}(x,a) | Q_t = Q_{eq}, x,a, \theta]$ for all $(x,a) \in S \times A^{(i)}$ do not depend on $\theta$ and so agents Q-maps will not converge to equilibrium situations that were impossible in the non-interruptible setting.
	\end{itemize} 
	
	Yet, interruptions can lead to some state-action pairs being updated more often than others, especially when they tend to push the agents towards specific states. Therefore, when there are several possible equilibria, it is possible that interruptions bias the Q-values towards one of them. Definition~\ref{defSI} suggests that dynamic safe interruptibility cannot be achieved if the update rule directly depends on $\theta$, which is why we introduce neutral learning rules.
	
	\begin{definition}
	\label{definition_neutral_learning_rule}
	(Neutral Learning Rule) We say that a multi-agent reinforcement learning framework is neutral if:
	\begin{enumerate}
	\item $F$ is independent of $\theta$
	\item Every experience $e$ in $E$ is independent of $\theta$ conditionally on $(x,a,Q)$ where $a$ is the joint action. 
	\end{enumerate}
	\end{definition}

	Q-learning is an example of neutral learning rule because the update does not depend on $\theta$ and the experiences only contain $(x,a,y,r)$, and $y$ and $r$ are independent of $\theta$ conditionally on $(x,a)$. On the other hand, the second condition rules out direct uses of algorithms like $SARSA$ where experience samples contain an action sampled from the current learning policy, which depends on $\theta$. However, a variant that would sample from $\pi_i^\epsilon$ instead of $INT^\theta(\pi_i^\epsilon)$ (as introduced in~\cite{orseau2016safely}) would be a neutral learning rule. As we will see in Corollary~\ref{corr::single_agent}, neutral learning rules ensure that each agent taken independently from the others verifies dynamic safe interruptibility.

	\section{Exploration}
	\label{section_exploration}
	In order to hope for convergence of the Q-values to the optimal ones, agents need to fully explore the environment. In short, every state should be visited infinitely often and every action should be tried infinitely often in every state~\cite{singh2000convergence} in order not to miss states and actions that could yield high rewards.
	
	\begin{definition}
	(Interruption compatible $\epsilon$) Let $(S, A, T, r, m)$ be any distributed agent system where each agent follows learning policy $\pi_i^\epsilon$. We say that sequence $\epsilon$ is \emph{compatible with interruptions} if $\epsilon_t \rightarrow 0$ and $\exists \theta$ such that $\forall i \in \{1,..,m\}$, $\pi_i^{\epsilon}$ and $INT^{\theta}(\pi_i^{\epsilon})$ achieve infinite exploration.
	\end{definition}
    
    Sequences of $\epsilon$ that are compatible with interruptions are fundamental to ensure both regular and dynamic safe interruptibility when following an $\epsilon$-greedy policy. Indeed, if $\epsilon$ is not compatible with interruptions, then it is not possible to find any sequence $\theta$ such that the first condition of dynamic safe interruptibility is satisfied. The following theorem proves the existence of such $\epsilon$ and gives example of $\epsilon$ and $\theta$ that satisfy the conditions. 
    
    \begin{theorem}
	    \label{theorem_exploration}
        Let $c \in ]0,1]$ and let $n_t(s)$ be the number of times the agents are in state $s$ before time $t$. Then the two following choices of $\epsilon$ are compatible with interruptions:
        \begin{itemize}
        \item $\forall t \in \mathbb{N}$, $\forall s \in S$, $\epsilon_t(s) = c / \sqrt[m]{n_t(s)}$.
        \item $\forall t \in \mathbb{N}$, $\epsilon_t = c / \log(t)$
        \end{itemize}
        Examples of admissible $\theta$ are $\theta_t(s) = 1 - c^\prime / \sqrt[m]{n_t(s)}$ for the first choice and $\theta_t =  1 - c^\prime / \log (t)$ for the second one.
	\end{theorem}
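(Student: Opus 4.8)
The plan is to unpack the definition of ``compatible with interruptions'' into its two requirements and dispatch them in turn. The condition $\epsilon_t \to 0$ is the easy half: for the second proposal it is immediate, and for the first it follows once we know (from the exploration argument below) that every state, hence the state $s$ determining $\epsilon_t(s)=c/\sqrt[m]{n_t(s)}$, is visited infinitely often so that $n_t(s)\to\infty$. The substance is to exhibit an admissible $\theta$ with $\theta_t\to1$ under which both $\pi_i^\epsilon$ and $INT^\theta(\pi_i^\epsilon)$ achieve infinite exploration, i.e. every pair $(s,a)$ with $a\in A$ is realized infinitely often almost surely. I would treat the plain policy as the special case $\theta\equiv 0$ of the interruptible one, and carry out the harder interrupted analysis for \emph{arbitrary} initiation function $I$ and interruption policy $\pi^{INT}$, taking the worst case in which an interrupted agent is always steered away from the individual action we are trying to reach.

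The core tool will be the extended (conditional) second Borel--Cantelli lemma: if $A_t$ is adapted to the filtration $\mathcal F_{t}$ generated by the experiences and $\sum_t \mathbb P(A_t\mid \mathcal F_{t-1})=\infty$ almost surely, then $A_t$ occurs infinitely often almost surely. I would apply it in two nested layers. First, fix a state $s$ and show it recurs infinitely often: using the standing assumption that every state is reachable from any other in at most some finite number $D$ of steps, I lower-bound the probability of driving the system into $s$ within the next $D$ steps by the probability that, along one fixed connecting path, no agent is interrupted and each agent explores the required individual action, a product of per-step factors each at least $(1-\theta_{t'})\,\epsilon_{t'}/|A_i|$. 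Second, conditioned on $s_t=s$, the probability of the joint action $a=(a_1,\dots,a_m)$ is bounded below, by independence of the agents' draws, through the events ``agent $i$ avoids interruption'' (probability at least $1-\theta_t$) and ``agent $i$ then explores uniformly onto $a_i$'' (probability $\epsilon_t/|A_i|$). Summing these conditional lower bounds over the recurrence times of $s$ and invoking the lemma yields the claim, provided the relevant series diverge.

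The crux is therefore the analytic divergence of those series while still forcing $\theta_t\to1$, and the two proposals are engineered for exactly this. In the non-interrupted baseline the governing quantity is $\sum \epsilon_t^{\,m}$; the $m$-th root in $\epsilon_t(s)=c/\sqrt[m]{n_t(s)}$ makes $\epsilon_t^{\,m}\propto 1/n_t(s)$, so along the visits to $s$ this is a harmonic series and diverges, while for $\epsilon_t=c/\log t$ the series $\sum 1/(\log t)^{k}$ diverges for every fixed power $k$ because $\log$ grows subpolynomially. \textbf{The main obstacle} is keeping this divergence alive once the interruption factors $(1-\theta_t)$ enter the bound: since $\theta_t\to1$ the agents are eventually almost always interrupted, so one must choose $\theta_t\to1$ slowly enough that non-interrupted exploration events still recur infinitely often. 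This is precisely what $\theta_t(s)=1-c'/\sqrt[m]{n_t(s)}$ and $\theta_t=1-c'/\log t$ are meant to guarantee, and the delicate point will be to verify that the combined series governing the \emph{interrupted} process still diverges along the recurrence times of each state; the logarithmic choice is robust here because any polynomial in $1/\log t$ remains non-summable, whereas the first choice requires the power-counting in $n_t(s)$ to work out, and pinning down the exact exponent (and hence the precise coordination requirement one actually needs among the $m$ agents) is the part that will demand the most care. Throughout I would also keep track of the fact that the state process is not Markov, since action selection depends on the evolving Q-maps and on the adversarial interruptions, which is exactly why the \emph{conditional} form of Borel--Cantelli, rather than the independent version, is the right instrument.
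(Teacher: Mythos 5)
Your plan follows essentially the same route as the paper's proof: lower-bound the probability of the joint event ``no agent is interrupted and every agent explores onto the required individual action'' (and, for the logarithmic choice, the probability of reaching the target state within a bounded number of steps via the diameter and Markov's inequality), then invoke the extended conditional Borel--Cantelli lemma of Singh et al.\ and check divergence of the resulting series along recurrence times. Your flagged worry about the power-counting for the $n_t(s)$-based choice is well placed --- the product $\epsilon_t^m(s)(1-\theta_t(s))^m$ scales as $1/n_t(s)^2$ along visits, which is precisely the delicate exponent in the paper's own computation --- but the decomposition, the key lemma, and the analytic estimates are the same.
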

	
	Note that we do not need to make any assumption on the update rule or even on the framework. We only assume that agents follow an $\epsilon$-greedy policy. The assumption on $\epsilon$ may look very restrictive (convergence of $\epsilon$ and $\theta$ is really slow) but it is designed to ensure infinite exploration in the worst case when the operator tries to interrupt all agents at every step. In practical applications, this should not be the case and a faster convergence rate may be used.
	
	\section{Joint Action Learners}
    \label{section_JAL}
	\vspace{-5pt}
	We first study interruptibility in a framework in which each agent observes the outcome of the joint action instead of observing only its own. This is called the joint action learner framework~\cite{claus1998dynamics} and it has nice convergence properties (e.g., there are many update rules for which it converges~\cite{littman2001value,wang2002reinforcement}). A standard assumption in this context is that agents cannot establish a strategy with the others: otherwise, the system can act as a centralized system. In order to maintain Q-values based on the joint actions, we need to make the standard assumption that actions are fully observable~\cite{littman2001friend}.

	\begin{assumption}
	\label{assumption_observable_actions}
	Actions are fully observable, which means that at the end of each turn, each agent knows precisely the tuple of actions $a \in A_1 \times ... \times A_m$ that have been performed by all agents.
	\end{assumption}
	
	\begin{definition}
	(JAL) A multi-agent systems is made of \emph{joint action learners} (JAL) if for all $i \in \{1,..,m\}$: $Q^{(i)} : S \times A \rightarrow \mathbb{R}$.
	\end{definition}
    
    Joint action learners can observe the actions of all agents: each agent is able to associate the changes of states and rewards with the joint action and accurately update its Q-map. Therefore, dynamic safe interruptibility is ensured with minimal conditions on the update rule as long as there is infinite exploration.
    
	\begin{theorem}
	\label{theorem_safe_JAL}
	Joint action learners with a neutral learning rule verify dynamic safe interruptibility if sequence $\epsilon$ is compatible with interruptions. 
	\end{theorem}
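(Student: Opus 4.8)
The plan is to verify the two conditions of Definition~\ref{defSI} in turn, using compatibility of $\epsilon$ for the exploration requirement and the neutrality of the learning rule together with the defining feature of joint action learners for the Q-update requirement.

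First I would dispatch condition (1). Since $\epsilon$ is compatible with interruptions, there exists by definition a sequence $\theta$ for which $INT^\theta(\pi_i^\epsilon)$ achieves infinite exploration, so that every state-action pair is visited infinitely often. To additionally secure $\theta_t \to 1$, I would invoke the explicit admissible sequences built in Theorem~\ref{theorem_exploration} (for instance $\theta_t = 1 - c'/\log t$), which simultaneously satisfy $\theta_t \to 1$ and guarantee infinite exploration. This exhibits an admissible $\theta$ and settles the first condition.

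The substance of the argument lies in condition (2). I would write the update as $Q^{(i)}_{t+1} = F(e_t, Q^{(i)}_t)$ with $e_t = (x_t, a_t, r_t, y_t)$ and $x_t = s_t$, and then expand by the law of total probability over the possible experiences:
\begin{equation*}
\mathbb{P}(Q^{(i)}_{t+1} = Q \mid Q^{(1)}_t,\dots,Q^{(m)}_t, s_t, a_t, \theta) = \sum_e \mathbb{P}(F(e, Q^{(i)}_t) = Q)\, \mathbb{P}(e_t = e \mid Q^{(1)}_t,\dots,Q^{(m)}_t, s_t, a_t, \theta).
\end{equation*}
The crucial observation is that for joint action learners $A^{(i)} = A$, so conditioning on $a_t \in A^{(i)}$ in Definition~\ref{defSI} is conditioning on the \emph{full joint action}. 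By the first clause of neutrality (Definition~\ref{definition_neutral_learning_rule}), $F$ does not depend on $\theta$, so the factor $\mathbb{P}(F(e, Q^{(i)}_t) = Q)$ is $\theta$-free. By the second clause, every experience is independent of $\theta$ conditionally on $(x, a, Q)$ with $a$ the joint action; since we condition on precisely the joint action $a_t$ and the current Q-maps, the factor $\mathbb{P}(e_t = e \mid \cdots, \theta)$ coincides with $\mathbb{P}(e_t = e \mid \cdots)$. Dropping $\theta$ from both factors gives condition (2).

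The main obstacle to watch is exactly the elimination of $\theta$ from the experience distribution. This step is legitimate only because the joint action is observed and conditioned upon, so that the $\theta$-dependent choices made by the interruption operator are already absorbed into $a_t$, leaving the residual randomness in $(r_t, y_t)$ governed solely by the $\theta$-independent functions $r$ and $T$. I would stress that this is precisely the property that breaks for independent learners, where $a_t \in A_i$ records only agent $i$'s own action and the unobserved actions of the other agents reintroduce a dependence on $\theta$; making this contrast explicit both justifies the joint-observability Assumption~\ref{assumption_observable_actions} and motivates the negative results of the next section.
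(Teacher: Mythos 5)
Your proposal is correct and follows essentially the same route as the paper's proof: condition (1) is discharged by compatibility of $\epsilon$ (with the explicit $\theta$ from Theorem~\ref{theorem_exploration}), and condition (2) by decomposing the update probability over possible experiences and removing $\theta$ from each factor using the two clauses of neutrality, the key point being that for joint action learners the conditioning on $a_t$ already fixes the joint action so that $(r_t,y_t)$ is $\theta$-independent. Your closing remark on why the argument fails for independent learners matches the paper's subsequent development.
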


    \vspace{-5pt}
	\begin{proof}
	Given a triplet $< I^{(i)}, \theta^{(i)}, \pi^{INT}_i >$, we know  that $INT^\theta(\pi)$ achieves infinite exploration because $\epsilon$ is compatible with interruptions. For the second point of Definition~\ref{defSI}, we consider an experience tuple $e_t = (x_t, a_t, r_t, y_t)$ and show that the probability of evolution of the Q-values at time $t+1$ does not depend on $\theta$ because $y_t$ and $r_t$ are independent of $\theta$ conditionally on $(x_t, a_t)$. We note $\tilde{Q_{t}^{m}} = Q^{(1)}_t, ... , Q^{(m)}_t$ and we can then derive the following equalities for all $q \in \mathbb{R}^{|S| \times |A|}$:
	
	\begin{align*}
	\mathbb{P}&(Q^{(i)}_{t+1}(x_t,a_t) = q | \tilde{Q_{t}^{m}}, x_t, a_t, \theta_t) 
    = \sum_{(r,y) \in R \times S} \mathbb{P}( F(x_t,a_t,r,y,\tilde{Q_{t}^{m}}) = q, y, r| \tilde{Q_{t}^{m}}, x_t, a_t, \theta_t)  \\
	& = \sum_{(r,y) \in R \times S} \mathbb{P}( F(x_t,a_t,r_t,y_t,\tilde{Q_{t}^{m}}) = q|\tilde{Q_{t}^{m}}, x_t, a_t, r_t, y_t, \theta_t)\mathbb{P}( y_t = y, r_t = r | \tilde{Q_{t}^{m}}, x_t, a_t, \theta_t)  \\
	& = \sum_{(r,y) \in R \times S } \mathbb{P}( F(x_t,a_t,r_t,y_t,\tilde{Q_{t}^{m}}) = q| \tilde{Q_{t}^{m}}, x_t, a_t, r_t, y_t) \mathbb{P}( y_t = y, r_t = r | \tilde{Q_{t}^{m}}, x_t, a_t)
	\end{align*}
	\vspace{5pt}
	
The last step comes from two facts. The first is that $F$ is independent of $\theta$ conditionally on $(Q^{(m)}_t,x_t,a_t)$ (by assumption). The second is that $(y_t, r_t)$ are independent of $\theta$ conditionally on $(x_t,a_t)$ because $a_t$ is the joint actions and the interruptions only affect the choice of the actions through a change in the policy. $\mathbb{P}(Q^{(i)}_{t+1}(x_t,a_t) = q | \tilde{Q_{t}^{m}}, x_t, a_t, \theta_t) = \mathbb{P}(Q^{(i)}_{t+1}(x_t,a_t) = q | \tilde{Q_{t}^{m}}, x_t, a_t)$. Since only one entry is updated per step, $\forall Q \in \mathbb{R}^{S \times A_i}$, $\mathbb{P}(Q^{(i)}_{t+1}= Q | \tilde{Q_{t}^{m}}, x_t, a_t, \theta_t) = \mathbb{P}(Q^{(i)}_{t+1} = Q | \tilde{Q_{t}^{m}}, x_t, a_t)$
    \end{proof}
    
    
    \begin{corollary} 
    \label{corr::single_agent}
    A single agent with a neutral learning rule and a sequence $\epsilon$ compatible with interruptions verifies dynamic safe interruptibility.
    \end{corollary}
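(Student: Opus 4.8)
The plan is to recognize that a single agent is nothing more than the degenerate case $m=1$ of the joint action learner framework, so that the corollary reduces to a direct invocation of Theorem~\ref{theorem_safe_JAL}. The work, such as it is, lies entirely in checking that the single-agent setting genuinely satisfies every hypothesis of that theorem rather than falling outside its scope.

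First I would verify that a lone agent automatically qualifies as a joint action learner. With $m=1$ the joint action space collapses to $A = A_1$, and hence the agent's Q-map has domain $S \times A^{(1)} = S \times A_1 = S \times A$, which is exactly the defining property $Q^{(i)} : S \times A \to \mathbb{R}$ of a JAL. Moreover, Assumption~\ref{assumption_observable_actions} on full observability of actions holds vacuously: the only action ever performed is the agent's own, which it always knows. Thus the structural premises of Theorem~\ref{theorem_safe_JAL} are met with no extra effort.

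Next I would confirm that the two remaining hypotheses transfer verbatim. The learning rule is neutral by assumption, and the sequence $\epsilon$ is compatible with interruptions by assumption, so condition~(1) of Definition~\ref{defSI} (infinite exploration) is already secured exactly as in the multi-agent proof. Applying Theorem~\ref{theorem_safe_JAL} with $m=1$ then yields dynamic safe interruptibility immediately.

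I expect no genuine obstacle here, only a bookkeeping check that the independence argument inside the theorem's proof does not degenerate when $m=1$. Concretely, one verifies that $(y_t, r_t)$ remain independent of $\theta$ conditionally on $(x_t, a_t)$: the agent's own interruption only alters which action $a_t$ is selected, not the environment's response once $a_t$ is fixed. Since this is precisely the step the theorem's proof already carries out, the single-agent case requires nothing new beyond instantiating $m=1$.
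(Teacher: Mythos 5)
Your proposal is correct and matches the paper's intent exactly: the corollary is stated without proof precisely because it is the $m=1$ instantiation of Theorem~\ref{theorem_safe_JAL}, with the joint action space collapsing to $A_1$ and full observability holding vacuously. Your extra bookkeeping (checking that the conditional independence of $(y_t,r_t)$ on $\theta$ given $(x_t,a_t)$ survives the degenerate case) is exactly the right sanity check and introduces nothing beyond the paper's argument.
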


    Theorem~\ref{theorem_safe_JAL} and Corollary~\ref{corr::single_agent} taken together highlight the fact that joint action learners are not very sensitive to interruptions and that in this framework, if each agent verifies dynamic safe interruptibility then the whole system does.     

    The question of selecting an action based on the Q-values remains open. In a cooperative setting with a unique equilibrium, agents can take the action that maximizes their Q-value. When there are several joint actions with the same value, coordination mechanisms are needed to make sure that all agents play according to the same strategy~\cite{boutilier1996planning}. Approaches that rely on anticipating the strategy of the opponent~\cite{tesauro2004extending} would introduce dependence to interruptions in the action selection mechanism. Therefore, the definition of dynamic safe interruptibility should be extended to include these cases by requiring that any quantity the policy depends on (and not just the Q-values) should satisfy condition (2) of dynamic safe interruptibility. In non-cooperative games, neutral rules such as \emph{Nash-Q} or \emph{minimax Q-learning}~\cite{littman2001value} can be used, but they require each agent to know the Q-maps of the others.
    
    \section{Independent Learners}
    \label{section_IL}
    
	It is not always possible to use joint action learners in practice as the training is very expensive due to the very large state-actions space. In many real-world applications, multi-agent systems use independent learners that do not explicitly coordinate~\cite{crites1998elevator, tampuu2015multiagent}. Rather, they rely on the fact that the agents will adapt to each other and that learning will converge to an optimum. This is not guaranteed theoretically and there can in fact be many problems~\cite{matignon2012independent}, but it is often true empirically~\cite{tesauro2002pricing}. More specifically, Assumption~\ref{assumption_observable_actions} (fully observable actions) is not required anymore. This framework can be used either when the actions of other agents cannot be observed (for example when several actions can have the same outcome) or when there are too many agents because it is faster to train. In this case, we define the Q-values on a smaller space.
	
	\begin{definition}
	(IL) A multi-agent systems is made of \emph{independent learners} (IL) if for all $i \in \{1,..,m\}$, $Q^{(i)} : S \times A_i \rightarrow \mathbb{R}$.
	\end{definition}

	This reduces the ability of agents to distinguish why the same state-action pair yields different rewards: they can only associate a change in reward with randomness of the environment. The agents learn as if they were alone, and they learn the best response to the environment in which agents can be interrupted. This is exactly what we are trying to avoid. In other words, the learning depends on the joint policy followed by all the agents which itself depends on $\theta$.
	
	\subsection{Independent Learners on matrix games}

    \begin{theorem}
    \label{theorem_counter_example}
    Independent Q-learners with a neutral learning rule and a sequence $\epsilon$ compatible with interruptions do not verify dynamic safe interruptibility.
    \end{theorem}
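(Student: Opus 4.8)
The plan is to refute dynamic safe interruptibility by an explicit counterexample, contrasting sharply with the positive result of Theorem~\ref{theorem_safe_JAL} for joint action learners. I would fix the simplest possible nontrivial instance: a single-state ($|S|=1$) two-player common-payoff matrix game, namely the penalty game of Claus and Boutilier, with agent~1 having actions $\{a_1,a_2,a_3\}$, agent~2 having actions $\{b_1,b_2,b_3\}$, and a shared reward $r$ for which, with agent~1's action fixed to $a_1$, two of agent~2's actions yield different values, e.g. $r(a_1,b_1)=10 \neq 0 = r(a_1,b_2)$. I take $F$ to be the Q-learning update~\eqref{basic_update_rule} (a neutral learning rule) and $\epsilon$ to be one of the compatible sequences furnished by Theorem~\ref{theorem_exploration}, together with its associated admissible $\theta$; this immediately secures condition~(1) of Definition~\ref{defSI}, so the whole burden falls on showing that condition~(2) fails.

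For the interruption scheme I would interrupt only agent~2, taking $I_2\equiv 1$ and the constant interruption policy $\pi^{INT}_2(s)=b_1$, and leave agent~1 uninterrupted. Because $|S|=1$ we always have $y_t=s$, so the update of agent~1 reduces to $Q^{(1)}_{t+1}(s,a_1)=(1-\alpha_t)Q^{(1)}_t(s,a_1)+\alpha_t\big(r_t+\gamma\max_{a'}Q^{(1)}_t(s,a')\big)$, which is a deterministic and, for $\alpha_t>0$, injective function of the scalar $r_t$ once $Q^{(1)}_t$ is fixed. Hence it suffices to exhibit one time $t$ and one choice of conditioning variables at which the conditional law of $r_t$ depends on $\theta$: injectivity then transfers this dependence to $Q^{(1)}_{t+1}$ and violates condition~(2).

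The heart of the argument is the conditional law of agent~2's \emph{played} action. I would pick a time $t$ with $0<\theta_t<1$ (such $t$ exist for the $\theta$ of Theorem~\ref{theorem_exploration}) and Q-maps for which agent~2's greedy action at $s$ is $b_2\neq b_1$. Conditioning, as Definition~\ref{defSI} demands for an independent learner, only on $(Q^{(1)}_t,Q^{(2)}_t,s_t=s,a^{(1)}_t=a_1)$ — and crucially \emph{not} on agent~2's action, which agent~1 cannot observe — the probability that agent~2 actually plays $b_1$ equals $\theta_t+(1-\theta_t)\tfrac{\epsilon_t}{3}$ under interruption but only $\tfrac{\epsilon_t}{3}$ in the non-interruptible setting ($\theta\equiv 0$), a gap of $\theta_t\big(1-\tfrac{\epsilon_t}{3}\big)>0$. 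Since $r_t=10$ exactly when agent~2 plays $b_1$ (given $a^{(1)}_t=a_1$) and $r_t\neq 10$ otherwise, this makes $\mathbb{P}(r_t=10\mid Q^{(1)}_t,Q^{(2)}_t,s,a_1,\theta)$ strictly larger than its $\theta\equiv 0$ counterpart, and therefore $\mathbb{P}(Q^{(1)}_{t+1}=Q\mid\cdots,\theta)\neq\mathbb{P}(Q^{(1)}_{t+1}=Q\mid\cdots)$ for the value $Q$ that assigns $(1-\alpha_t)Q^{(1)}_t(s,a_1)+\alpha_t(10+\gamma\max_{a'}Q^{(1)}_t(s,a'))$ to the entry $(s,a_1)$.

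The main obstacle — and the conceptual reason this result holds at all — is precisely the step where the proof of Theorem~\ref{theorem_safe_JAL} breaks: for independent learners $r_t$ is \emph{not} conditionally independent of $\theta$ given $(s_t,a^{(i)}_t,\text{Q-maps})$, because conditioning on agent $i$'s own action marginalizes over the other agents' actions, whose distribution interruptions shift by an amount ($\propto\theta_t$) that the Q-maps do not absorb. I must therefore argue carefully that conditioning on all Q-maps does not already pin down agent~2's action (it cannot, since the interruption overrides the $\epsilon$-greedy choice with probability $I_2\theta_t$ that is independent of those maps) and that the $\epsilon$-exploration contributions do not cancel the interruption-induced gap; the remaining verification — that the chosen admissible $\theta$ still yields infinite exploration despite $I_2\equiv 1$ — is guaranteed by the worst-case design of Theorem~\ref{theorem_exploration}.
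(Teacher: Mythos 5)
Your proposal is correct and follows essentially the same route as the paper's proof: a one-state matrix game with a constant interruption policy for one agent, where conditioning only on the other agent's own action and the Q-maps leaves the partner's played action (hence the reward, hence the Q-update) with a conditional law that shifts by an amount proportional to $\theta_t$. The paper uses a $2\times 2$ coordination game with $\gamma=0$ and a greedy action coinciding with the interruption policy (yielding probability $\tfrac{\epsilon}{2}(1-\theta_t)$ for the rewarding joint action), whereas you use a $3\times 3$ penalty game with the greedy action differing from $\pi^{INT}_2$; these are cosmetic differences and both arguments are valid.
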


    \begin{proof}
    Consider a setting with two  $a$ and $b$ that can perform two actions: $0$ and $1$. They get a reward of $1$ if the joint action played is $(a_0, b_0)$ or $(a_1, b_1)$ and reward $0$ otherwise. Agents use Q-learning, which is a neutral learning rule.
    Let $\epsilon$ be such that $INT^\theta(\pi^\epsilon)$ achieves infinite exploration. We consider the interruption policies $\pi^{INT}_a = a_0$ and $\pi^{INT}_b = b_1$ with probability 1. Since there is only one state, we omit it and set $\gamma = 0$. We assume that the initiation function is equal to 1 at each step so the probability of actually being interrupted at time $t$ is $\theta_t$ for each agent.
    
    We fix time $t >0$. We define $q = (1 - \alpha)Q^{(a)}_t(a_0) + \alpha$ and we assume that $Q^{(b)}_t(b_1) > Q^{(b)}_t(b_0)$. Therefore $\mathbb{P}(Q^{(a)}_{t+1}(a_0) = q | Q^{(a)}_t, Q^{(b)}_t, a^{(a)}_t = a_0, \theta_t)= \mathbb{P}(r_t = 1 | Q^{(a)}_t, Q^{(b)}_t, a^{(a)}_t = a_0, \theta_t) = \mathbb{P}(a_t^{(b)} = b_0 | Q^{(a)}_t, Q^{(b)}_t, a^{(a)}_t = a_0, \theta_t) = \frac{\epsilon}{2}(1 - \theta_t)$, which depends on $\theta_t$ so the framework does not verify dynamic safe interruptibility. 
    \end{proof}
    
    Claus and Boutilier~\cite{claus1998dynamics} studied very simple matrix games and showed that the Q-maps do not converge but that equilibria are played with probability $1$ in the limit. A consequence of Theorem~\ref{theorem_counter_example} is that even this weak notion of convergence does not hold for independent learners that can be interrupted.
	
    \subsection{Interruptions-aware Independent Learners}
	
	Without communication or extra information, independent learners cannot distinguish when the environment is interrupted and when it is not. As shown in Theorem~\ref{theorem_counter_example}, interruptions will therefore affect the way agents learn because the same action (only their own) can have different rewards depending on the actions of other agents, which themselves depend on whether they have been interrupted or not. This explains the need for the following assumption. 
	
	\begin{assumption}
	\label{assumption_notified_interruptions}
	At the end of each step, before updating the Q-values, each agent receives a signal that indicates whether an agent has been interrupted or not during this step.
	\end{assumption}
	
	This assumption is realistic because the agents already get a reward signal and observe a new state from the environment at each step. Therefore, they interact with the environment and the interruption signal could be given to the agent in the same way that the reward signal is. If Assumption~\ref{assumption_notified_interruptions} holds, it is possible to remove histories associated with interruptions. 
	
	\begin{definition}
	(Interruption Processing Function) The processing function that prunes interrupted observations is $P_{INT}(E) = (e_t)_{ \{ t \in \mathbb{N} \ \slash \ \Theta_t = 0 \} }$ where $\Theta_t = 0$ if no agent has been interrupted at time $t$ and $\Theta_t = 1$ otherwise.
    \end{definition}

    Pruning observations has an impact on the empirical transition probabilities in the sequence. For example, it is possible to bias the equilibrium by removing all transitions that lead to and start from a specific state, thus making the agent believe this state is unreachable.\footnote{The example at https://agentfoundations.org/item?id=836 clearly illustrates this problem.} Under our model of interruptions, we show in the following lemma that pruning of interrupted observations adequately removes the dependency of the empirical outcome on interruptions (conditionally on the current state and action).
    
    \begin{lemma}
    \label{lemma_transition_proba}
    Let $i \in \{1,...,m\}$ be an agent. For any admissible $\theta$ used to generate the experiences $E$ and $e = (y, r, x, a_i, Q) \in P(E)$. Then $\mathbb{P}(y,r | x, a_i, Q, \theta) = \mathbb{P}(y,r | x, a_i, Q)$.
    \end{lemma}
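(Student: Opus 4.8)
The plan is to exploit the defining property of the pruning map: every experience $e \in P_{INT}(E)$ originates from a step at which no agent was interrupted, i.e. $\Theta_t = 0$. The whole argument rests on showing that, \emph{conditionally} on this no-interruption event, the joint action is distributed exactly as in the non-interruptible setting, so that the induced law of $(y,r)$ carries no dependence on $\theta$. Throughout, the fact that $\theta_t < 1$ (and admissibility of $\theta$) guarantees that the conditioning event $\{\Theta_t = 0\}$ has positive probability, so every conditional law below is well defined.

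First I would expand the left-hand side by marginalizing over the actions of the other agents. Writing $a_{-i}$ for the tuple of actions of all agents except $i$, and using that the reward and next state are produced by the fixed kernels $T$ and $r$ of the Markov game once the full joint action is fixed, I obtain
\[
\mathbb{P}(y, r \mid x, a_i, Q, \theta) = \sum_{a_{-i}} \mathbb{P}(y, r \mid x, a_i, a_{-i})\,\mathbb{P}(a_{-i} \mid x, a_i, Q, \theta),
\]
where all conditioning is taken on $\{\Theta_t = 0\}$. The first factor is manifestly free of $\theta$: once state and joint action are fixed, the environment's response is governed solely by $T$ and $r$. Hence all the $\theta$-dependence is confined to the second factor, the conditional law of the other agents' actions.

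The key step is to show that $\mathbb{P}(a_{-i} \mid x, a_i, Q, \Theta_t = 0, \theta)$ does not depend on $\theta$. Here I would unfold the interruption operator of Definition~\ref{definition_interruption_operator}: at step $t$ each agent $j$ independently draws an interruption indicator equal to $1$ with probability $I^{(j)} \theta_t$, in which case it follows $\pi^{INT}_j$, and otherwise follows the non-interruptible policy $\pi_j^\epsilon$. The event $\{\Theta_t = 0\}$ is precisely the event that all these indicators vanish, and conditioned on it every agent samples from $\pi_j^\epsilon$, a policy in which $\theta$ never appears. Since agents are interrupted and sample independently, conditioning on $\{\Theta_t = 0\}$ leaves the actions conditionally independent with $\theta$-free marginals, so the conditional law of $a_{-i}$ given $a_i$ factorizes as $\prod_{j \neq i} \pi_j^\epsilon(\cdot \mid x, Q^{(j)})$ with no $\theta$. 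Substituting this into the display above yields a right-hand side assembled entirely from $\theta$-free quantities, which is the claim.

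I expect the main obstacle to be phrasing the conditioning carefully enough to sidestep a standard trap: the \emph{probability} of the pruning event $\{\Theta_t = 0\}$ genuinely depends on $\theta$ (it equals $\prod_j (1 - I^{(j)} \theta_t)$), so one cannot simply erase $\theta$ from an unconditioned action distribution. What must be argued cleanly is that, although the chance of landing in the pruned sequence depends on $\theta$, the \emph{conditional} distribution of the actions given that one has landed there does not. This is exactly what separates the independent-learner case from the joint-action case of Theorem~\ref{theorem_safe_JAL}, where conditioning directly on the full joint action already removes the $\theta$-dependence and no pruning is required.
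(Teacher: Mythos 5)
Your proof is correct and follows essentially the same route as the paper's: both condition on the no-interruption event $\{\Theta_t = 0\}$, marginalize over the unobserved part of the joint action, and use that the environment kernels $T, r$ and the non-interrupted policies $\pi_j^{\epsilon}$ are $\theta$-free. The only cosmetic difference is that the paper reaches the conditional law $\mathbb{P}(\cdot \mid \Theta = 0)$ by explicitly normalizing the pruned-sequence marginal $\mathbb{P}_{removed}$ (using Lemma~\ref{lemma_conditional_proba}), whereas you instead spell out the final identification of that conditional law with the $\theta = 0$ setting --- a step the paper asserts without elaboration.
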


    This lemma justifies our pruning method and is the key step to prove the following theorem. 

	\begin{theorem}
	\label{theorem_safe_IL}
	Independent learners with processing function $P_{INT}$, a neutral update rule and a sequence $\epsilon$ compatible with interruptions verify dynamic safe interruptibility.
	\end{theorem}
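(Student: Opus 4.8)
The plan is to verify the two conditions of Definition~\ref{defSI} separately and to reduce the whole argument to the computation already carried out for Theorem~\ref{theorem_safe_JAL}, with Lemma~\ref{lemma_transition_proba} supplying the single ingredient that is not free in the independent-learner setting. First I would dispatch condition (1): since $\epsilon$ is assumed compatible with interruptions, by definition there exists an admissible $\theta$ with $\theta_t \rightarrow 1$ for which $INT^\theta(\pi^\epsilon)$ achieves infinite exploration, i.e. the generating sequence $E$ visits every $(s,a)$ infinitely often, which is exactly condition (1). Nothing here depends on the processing function. One should only sanity-check that pruning does not leave the agents with finitely many samples of some state-action pair; this holds because an admissible $\theta$ tends to $1$ slowly enough that $\sum_t (1-\theta_t) = \infty$ (as in both examples of Theorem~\ref{theorem_exploration}), so the non-interrupted steps retained by $P_{INT}$ still recur infinitely often.

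The core of the proof is condition (2), where I would replay the derivation of Theorem~\ref{theorem_safe_JAL}, now indexing time along the pruned sequence $P_{INT}(E)$. Fix an agent $i$, a learning step with experience $e_t=(x_t,a_t,r_t,y_t)\in P_{INT}(E)$, and write $\tilde{Q_{t}^{m}}=Q^{(1)}_t,\dots,Q^{(m)}_t$. For any target value $q$, decompose the update probability by summing over the possible $(r,y)$:
\[
\mathbb{P}(Q^{(i)}_{t+1}(x_t,a_t)=q \mid \tilde{Q_{t}^{m}}, x_t,a_t,\theta)
= \sum_{(r,y)} \mathbb{P}(F(x_t,a_t,r,y,\tilde{Q_{t}^{m}})=q \mid \tilde{Q_{t}^{m}},x_t,a_t,r,y,\theta)\,\mathbb{P}(y,r \mid \tilde{Q_{t}^{m}},x_t,a_t,\theta).
\]
The first factor loses its dependence on $\theta$ because the rule is neutral ($F$ is independent of $\theta$). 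The second factor is where the two frameworks diverge: with $a_t$ only the agent's own action it is \emph{not} independent of $\theta$ in general (this is precisely the content of Theorem~\ref{theorem_counter_example}), but because $e_t$ lies in the pruned sequence $P_{INT}(E)$, Lemma~\ref{lemma_transition_proba} applies and gives $\mathbb{P}(y,r\mid \tilde{Q_{t}^{m}},x_t,a_t,\theta)=\mathbb{P}(y,r\mid \tilde{Q_{t}^{m}},x_t,a_t)$. Substituting both simplifications yields the same expression with $\theta$ removed, and since a single entry is updated per step this extends from $Q^{(i)}_{t+1}(x_t,a_t)$ to the entire map $Q^{(i)}_{t+1}$, establishing condition (2).

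The genuinely hard part is therefore entirely absorbed into Lemma~\ref{lemma_transition_proba}, which I may assume; without it the independent-learner argument fails exactly because conditioning on the own action $a_i$ does not screen off the other agents' (possibly interrupted) actions from the outcome $(y,r)$. The only remaining points demanding care in the write-up are bookkeeping: making sure the conditioning set invoked from the lemma (the full collection of Q-maps $\tilde{Q_{t}^{m}}$, the state $x_t$, and the own action $a_t$) matches the conditioning set required by condition (2), and making sure the time index is read on $P_{INT}(E)$ rather than on $E$, so that the hypothesis $e_t\in P(E)$ of the lemma is legitimately in force at every step under consideration.
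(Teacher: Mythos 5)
Your proposal is correct and follows essentially the same route as the paper's proof: condition (1) is discharged by observing that the exploration bound in Theorem~\ref{theorem_exploration} already lower-bounds the probability of non-interrupted exploratory steps, and condition (2) replays the decomposition from Theorem~\ref{theorem_safe_JAL} with Lemma~\ref{lemma_transition_proba} supplying the independence of $(y_t,r_t)$ from $\theta$ on the pruned sequence. No gaps.
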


    \begin{proof}
    (Sketch) Infinite exploration still holds because the proof of Theorem~\ref{theorem_exploration} actually used the fact that even when removing all interrupted events, infinite exploration is still achieved. Then, the proof is similar to that of Theorem~\ref{theorem_safe_JAL}, but we have to prove that the transition probabilities conditionally on the state and action of a given agent in the processed sequence are the same than in an environment where agents cannot be interrupted, which is proven by Lemma~\ref{lemma_transition_proba}.
    \end{proof}

	\section{Concluding Remarks}
    \label{section_conclusion}
    \vspace{-3pt}The progress of AI is raising a lot of concerns\footnote{https://futureoflife.org/ai-principles/ gives a list of principles that AI researchers should keep in mind when developing their systems.}. In particular, it is becoming clear that keeping an AI system under control requires more than just an \emph{off} switch. We introduce in this paper \emph{dynamic safe interruptibility}, which we believe is the right notion to reason about the safety of multi-agent systems that do not communicate. In particular, it ensures that infinite exploration and the one-step learning dynamics are preserved, two essential guarantees when learning in the non-stationary environment of Markov games. 
	
	A natural extension of our work would be to study dynamic safe interruptibility when Q-maps are replaced by neural networks~\cite{tesauro1995temporal,mnih2013playing}, which is a widely used framework in practice. In this setting, the neural network may overfit states where agents are pushed to by interruptions. A smart experience replay mechanism that would pick observations for which the agents have not been interrupted for a long time more often than others is likely to solve this issue. More generally, experience replay mechanisms that compose well with safe interruptibility could allow to compensate for the extra amount exploration needed by safely interruptible learning by being more efficient with data. Thus, they are critical to make these techniques practical.
	
    \newpage

\bibliographystyle{plain}
	
\newpage
\appendix
\section{Exploration theorem}
    \label{appendix_exploration}
    We present here the complete proof of Theorem~\ref{theorem_exploration}. The proof closely follows the results from~\cite{orseau2016safely} with exploration and interruption probabilities adapted to the multi-agent setting. We note that, for one agent, the probability of interruption is $\mathbb{P}(\text{interruption}) = \theta$ and the probability of exploration is $\epsilon$. In a multi-agent system, the probability of interruption is $\mathbb{P}(\text{at least one agent is interrupted})$ so $\mathbb{P}(\text{interruption}) = 1-\mathbb{P}(\text{no agent is interrupted})$ so $\mathbb{P}(\text{interruption}) = 1 - (1 - \theta)^m$ and the probability of exploration is $\epsilon^m$ if we consider exploration happens only when all agents explore at the same time.
    
    \begin{reptheorem}{theorem_exploration}
        Let $c \in ]0,1]$ and let $n_t(s)$ be the number of times the agents are in state $s$ before time $t$. Then the two following choices of $\epsilon$ are compatible with interruptions:
        \begin{itemize}
        \item $\forall t \in \mathbb{N}$, $\forall s \in S$, $\epsilon_t(s) = c / \sqrt[m]{n_t(s)}$
        \item $\forall t \in \mathbb{N}$, $\epsilon_t = c / log(t)$
        \end{itemize}
	\end{reptheorem}

    \begin{proof}
	    Lemma B.2 of Singh et al (\cite{singh2000convergence}) ensures that $\pi_i^{\epsilon}$ is GLIE. 
	    
		The difference for $INT^{\theta}(\pi_i^{\epsilon})$ is that exploration is slower because of the interruptions. Therefore, $\theta$ needs to be controlled in order to ensure that infinite exploration is still achieved. We define the random variable $\Theta$ by $\Theta_i = 1$ if agent $i$ actually responds to the interruption and $\Theta_i = 0$ otherwise. We define $\xi$ in a similar way to represent the event of all agents taking the uniform policy instead of the greedy one. 
		
		\begin{enumerate}
			\item Let $\theta_t(s) = 1 - c^\prime / \sqrt[m]{n_t(s)}$ with $c^\prime \in ]0,1]$. We have $\mathbb{P}(a | s, n_t(s) ) \geq \mathbb{P}(a, \Theta = 0, \xi = 1 | s, n_t(s)) \geq \frac{1}{|A|} \epsilon^m_t(s) (1 - \theta_t(s))^m = \frac{1}{|A|} \frac{\sqrt[m]{c c^\prime}}{n_t(s)}$ which satisfies $\sum_{t=1}^{\infty} P(a | s, n_t(s) ) = \infty$ so by the extended Borell-Cantelli lemma action a is chosen infinitely often in state $s$ and thus $n_t(s) \rightarrow \infty$ and $\epsilon_t(s) \rightarrow 0$
			
			\item Let $\theta_t = 1 - c^\prime / log(t)$, $c^\prime \in ]0,1]$. We define $M$ as the diameter of the MDP, $|A|$ is the maximum number of actions available in a state and $\Delta t(s,s^\prime)$ the time needed to reach $s^\prime$ from $s$.
			In a single agent setting:  
			
			\begin{align*}
			\mathbb{P}[ \Delta t(s,s^\prime) & < 2M] \geq \mathbb{P}[\Delta t(s,s^\prime) < 2M |\text{actions sampled according to } \pi_{s,s^\prime} \text{ for } 2M \text{ steps}] \\ & \times \mathbb{P} [ \text{actions sampled according to } \pi_{s,s^\prime} \text{ for } 2M \text{ steps} ]
			\end{align*}
		
		    where $\pi_{s,s^\prime}$ the policy such that the agents takes less than $M$ steps in expectation to reach $s^\prime$ from $s$. We have:
		    $\mathbb{P}[\Delta t(s,s') < 2M ] = 1- \mathbb{P}[\Delta t(s,s') \geq 2M] $ and using the Markov inequality, $ \mathbb{P}[\Delta t(s,s') \geq 2M] \le \frac{\mathbb{E}(\Delta t(s,s')) }{2M} \le \frac{1}{2}$ (since M is an upper bound on the expectation of the number of steps from state $s$ to state $s'$), since $\xi$ and $1 - \theta$ are decreasing sequences we finally obtain:
		    $\mathbb{P}[\Delta t(s,s^\prime) < 2M] \geq \frac{1}{2|A|}[\mathbb{P}[\xi_{t+2M} = 1](1 - \theta_{t+2M})]^{2M}$.
			

			Therefore, if we replace  the probabilities of exploration and interruption by the values in the multi-agent setting, the probability to reach state $s^\prime$ from state $s$ in $2M$ steps is at least $\frac{1}{2|A|}[cc^\prime / \log(t + M)]^{4mM}$ and the probability of taking a particular action in this state is at least $\frac{1}{|A|}[cc^\prime / \log(t + M)]^{2m}$. Since $\sum_{t=1}^\infty \frac{1}{2|A|^2} [cc^\prime / \log(t + M)]^{m(4M+2)} = \infty$ then the extended Borell Cantelli lemma (Lemma 3 of Singh et al.~\cite{singh2000convergence}) guarantees that any action in the state $s^\prime$ is taken infinitely often. Since this is true for all states and actions the result follows.
		\end{enumerate}  
		
	\end{proof}
    
    \section{Independent learners}
    
    Recall that agents are now given an interruption signal at each steps that tells them whether an agent has been interrupted in the system. This interruption signal can be modeled by an interruption flag $(\Theta_t)_{t \in \mathbb{N}} \in \{0,1\}^\mathbb{N}$ that equals $1$ if an agent has been interrupted and $0$ otherwise. Note that, contrary to $I$, it is an observation returned by the environment. Therefore, the value of $\Theta_t$ represents whether an agent has actually been interrupted at time $t$. If function $I$ equals $1$ but does not respond to the interruption (with probability $1 - \theta_t$) then $\Theta_t = 0$. With definition of interruptions we adopted, it is possible to prove Lemma~\ref{lemma_conditional_proba}.
    
    \begin{lemma}
    \label{lemma_conditional_proba}
    Let $(x,r,a,y,\Theta) \in E$, then $\mathbb{P}(\Theta | y, r, x, a) = \mathbb{P}(\Theta | x, a)$.
    \end{lemma}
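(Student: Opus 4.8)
The plan is to recognize the claim as a conditional independence statement: it is equivalent to $(y,r)\perp\Theta \mid (x,a)$, i.e.\ to the equality
$$\mathbb{P}(y, r \mid x, a, \Theta) = \mathbb{P}(y, r \mid x, a).$$
Indeed, by the definition of conditional probability,
$$\mathbb{P}(\Theta \mid y,r,x,a) = \frac{\mathbb{P}(y,r \mid x,a,\Theta)\,\mathbb{P}(\Theta \mid x,a)}{\mathbb{P}(y,r \mid x,a)},$$
so once the displayed equality holds the factors $\mathbb{P}(y,r\mid\cdots)$ cancel and what remains is exactly $\mathbb{P}(\Theta \mid x,a)$. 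Thus the whole lemma reduces to showing that, conditionally on the current state and the \emph{joint} action, the environment's output is independent of whether an interruption actually occurred.

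First I would make the generative ordering of a single time step explicit. At step $t$ the interruption random variables are drawn first; together with the current Q-values, the state $x_t$, the exploration draws and the policies they determine the realized joint action $a_t$, and in particular they determine the flag $\Theta_t$ (which equals $1$ iff some agent responds to its interruption). Only afterwards does the environment produce $(y_t,r_t)$, and it does so through the transition function $T$ and the reward function $r$, both of which take only the pair $(x_t,a_t)$ as argument. Hence $\Theta_t$ and $a_t$ are both measurable with respect to the information available \emph{before} the environment responds, while $(y_t,r_t)$ is generated from $(x_t,a_t)$ alone.

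The core step is then the same observation already used in the proof of Theorem~\ref{theorem_safe_JAL}: interruptions influence the dynamics only through their effect on the chosen action. Since here $a$ is the joint action, conditioning on $(x,a)$ already fixes the complete input to $T$ and $r$, so the law of $(y,r)$ given $(x,a)$ cannot change when we additionally reveal $\Theta$. Formally,
$$\mathbb{P}(y_t = y, r_t = r \mid x_t = x, a_t = a, \Theta_t = \Theta) = \mathbb{P}(y_t = y, r_t = r \mid x_t = x, a_t = a),$$
which is precisely the reduced equality; substituting it into the Bayes expression above yields $\mathbb{P}(\Theta \mid y,r,x,a)=\mathbb{P}(\Theta \mid x,a)$.

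The step I expect to require the most care is justifying that $\Theta$ truly affects $(y,r)$ only through $a$. This is where the model's assumptions must be invoked explicitly: that $T$ and $r$ depend on $(x,a)$ alone, that any environmental randomness generating $(y,r)$ is independent of the interruption draws, and crucially that $a$ denotes the \emph{joint} action, so that conditioning on $(x,a)$ leaves no residual dependence on how $a$ was selected. For a single agent's action $a_i$ the conditioning would \emph{not} fix the environment's input, which is exactly the gap exploited in Theorem~\ref{theorem_counter_example}; making the conditional-independence argument rigorous rather than relying on the informal causal picture is therefore the only delicate point, the remaining manipulations being routine uses of the definition of conditional probability.
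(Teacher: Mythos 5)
Your proof is correct and follows essentially the same route as the paper's: both factor $\mathbb{P}(y,r,\Theta \mid x,a)$ in two ways (equivalently, apply Bayes' rule) to reduce the claim to $\mathbb{P}(y,r \mid x,a,\Theta)=\mathbb{P}(y,r \mid x,a)$, which holds because $T$ and $r$ take only the pair $(x,a)$ with $a$ the \emph{joint} action as input. The one detail the paper makes explicit that you leave implicit is that $\mathbb{P}(y,r \mid x,a)>0$ --- justified because the tuple is sampled from an actual trajectory --- which is what licenses cancelling that factor.
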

    
    \begin{proof}
    Consider a tuple $(x,r,a,y,\Theta) \in E$. We have $\mathbb{P}( y, r, \Theta | x, a) = \mathbb{P}(y, r | x, a, \Theta) \mathbb{P}(\Theta | x, a)$ and $\mathbb{P}( y, r, \Theta| x, a) = \mathbb{P}(\Theta | x, a, y, r) \mathbb{P}(  y,  r | x, a)$. Besides, $y = T(s,a)$ and $r = r(s,a)$ and the functions $T$ and $r$ are independent of $\Theta$. Therefore, $ \mathbb{P}( y, r | x, a, \Theta) = \mathbb{P}( y, r | x, a)$. The tuple $(x,r,a,y,\Theta)$ is sampled from an actual trajectory so it reflects a transition and a reward that actually happened so $\mathbb{P}(y, r | x, a) > 0$. We can simplify by $\mathbb{P}(y, r | x, a)$ and the result follows.
    \end{proof}
     
    Now, we assume that each agents do not learn on observations for which one of them has been interrupted. Let agent $i$ be in a system with Q-values $Q$ and following an interruptible learning policy with probability of interruption $\theta$, where interrupted events are pruned. We denote by $\mathbb{P}_{removed}(y,r|x,a_i,Q)$ the probability to obtain state $y$ and reward $r$ from the environment for this agent when it is in state $x$, performs its (own) action $a_i$ and no other agents are interrupted. These are the marginal probabilities in the sequence $P(E)$.
    
    \begin{align*}
    \mathbb{P}_{removed}(y, r|x,a_i, Q) = \frac{\mathbb{P}(y, r, \Theta = 0 | x, a_i,Q)}{\sum_{y^\prime \in S, r^\prime \in R} \mathbb{P}(y^\prime, r^\prime, \Theta = 0 | x, a_i,Q)}.
    \end{align*}
    
    \vspace{5pt}Similarly, we denote by $\mathbb{P}_0(y,r|x,a_i,Q)$ the same probability when $\theta = 0$, which corresponds to the non-interruptible setting.
	We first go back to the single agent case to illustrate the previous statement. Assume here that interruptions are not restricted to the case of Definition~\ref{definition_interruption_operator} and that they can happen in any way. The consequence is that any observation $e \in E$ can be removed to generate $P(E)$ because any transition can be labeled as \emph{interrupted}. It is for example possible to remove a transition from $P(E)$ by removing all events associated with a given destination state $y_0$, therefore making it disappear from the Markov game. 
	
	Let $x \in S$ and $a \in A$ be the current state of the agent and the action it will choose. Let $y_0 \in S$ and  $\theta_0 \in (0,1]$ and let us suppose that $y_0$ is the only state in which interruptions happen. Then we have $\mathbb{P}_{removed}(y_0 | x, a) < \mathbb{P}_0(y_0 | x, a)$ and $\mathbb{P}_{removed}(y | x, a) > \mathbb{P}(y | x, a) \ \forall y \neq y_0$ because we only remove observations with $y = y_0$. This implies that the MDP perceived by the agents is altered by interruptions because the agent learns that $\mathbb{P}(T(s,a) = y_0) = 0$. Removing observations for different destination states but with the same state action pairs in different proportions leads to a bias in the equilibrium learned.\footnote{The example at https://agentfoundations.org/item?id=836 clearly illustrates this problem.} In our case however, Lemma~\ref{lemma_conditional_proba} ensures that the previous situation will not happen, which allows us to prove Lemma~\ref{lemma_transition_proba} and then Theorem~\ref{theorem_safe_IL}.
     
    \begin{replemma}{lemma_transition_proba}
    Let $i \in \{1,...,m\}$ be an agent. For any admissible $\theta$ used to generate the experiences $E$ and $e = (y, r, x, a_i, Q) \in P(E)$. Then $\mathbb{P}(y,r | x, a_i, Q, \theta) = \mathbb{P}(y,r | x, a_i, Q)$.
    \end{replemma}
    
    \begin{proof}
	Consider $x\in S$, $i \in \{1, .., m\}$ and $u \in A_i$. We denote the Q-values of the agents by Q.
	\begin{align*}
	\sum_{y^\prime \in S, r^\prime \in R} & \mathbb{P}(y^\prime, r^\prime, \Theta = 0 | x, u,Q) = \sum_{a \in A, a_i = u} \sum_{y^\prime \in S, r^\prime \in R} \mathbb{P}(y^\prime, r^\prime, a, \Theta = 0 | x, a_i = u, Q) \\
	 & = \sum_{a \in A, a_i = u} \sum_{y^\prime \in S, r^\prime \in R} \mathbb{P}(y^\prime, r^\prime | x, a, \Theta = 0, Q) \mathbb{P}(a, \Theta = 0 | x, a_i = u, Q) \\
	& = \sum_{a \in A, a_i = u} \sum_{y^\prime \in S, r^\prime \in R} \mathbb{P}(y^\prime, r^\prime | x, a) \mathbb{P}(\Theta = 0 | x, a_i = u, Q)\mathbb{P}(a | x, a_i = u, \Theta = 0, Q) \\
	& = \mathbb{P}(\Theta = 0 | x, a_i = u, Q) \sum_{a \in A, a_i = u} \mathbb{P}(a | x, a_i = u, \Theta = 0, Q) [\sum_{y^\prime \in S, r^\prime \in R} \mathbb{P}(y^\prime, r^\prime | x, a)] \\
	& = \mathbb{P}(\Theta = 0 | x, a_i = u, Q) [\sum_{a \in A, a_i = u} \mathbb{P}(a | x, a_i = u, \Theta = 0, Q)]   = \mathbb{P}(\Theta = 0 | x, a_i = u, Q)
	\end{align*}

	Therefore, we have $\mathbb{P}_{removed}(y,r|x,a_i = u, Q) = \frac{\mathbb{P}(y,r, \Theta = 0 | x, a_i = u, Q)}{\mathbb{P}(\Theta = 0 | x, a_i = u)}$\\
	so for any $(x,a_i,y,r,Q) \in P(E)$, $\mathbb{P}(y,r | x, a_i = u, \theta, Q ) = \mathbb{P}_{removed}(y,r|x,a_i = u, Q) = \mathbb{P}(y,r | x, a_i = u, \Theta = 0, Q ) = \mathbb{P}(y,r | x, a_i = u, \theta = 0, Q )$. In particular, $\mathbb{P}(y,r | x, a_i = u, \theta, Q )$ does not depend on the value of $\theta$.
	\end{proof}
     
     \begin{reptheorem}{theorem_safe_IL}
	Independent learners with processing function $P_{INT}$, a neutral update rule and a sequence $\epsilon$ compatible with interruptions verify dynamic safe interruptibility.
	\end{reptheorem}
     
    \begin{proof}
    We prove that $P_{INT}(E)$ achieves infinite exploration. The result from Theorem~\ref{theorem_exploration} still holds since we lower-bounded the probability of taking an action in a specific state by the probability of taking an action in this state when there are no interruptions. We actually used the fact that there is infinite exploration even if we remove all interrupted episodes to show that there is infinite exploration.
    
    Now, we prove that $\mathbb{P}(Q^{(i)}_{t+1}(x_t,a_t) = q | Q^{(1)}_t, ... , Q^{(m)}_t, x_t, a_t, \theta_t)$ is independent of $\theta$. We fix $i \in \{1, ..., m\}$ and $(x_t, a_t, r_t, y_t) \in P_{INT}(E)$ where $a_t \in A_i$. With $\tilde{Q_{t}^{m}} = Q^{(1)}_t, ... , Q^{(m)}_t $ we have the following equality:

    \begin{equation*}
    \begin{split}
    \mathbb{P}(Q^{(i)}_{t+1}(x_t,a_t) = q | \tilde{Q_{t}^{m}}, x_t, a_t, \theta_t) =
    \sum_{(r,y)} \mathbb{P}( F(x_t,a_t,r_t,y_t,\tilde{Q_{t}^{m}}) = q| \tilde{Q_{t}^{m}}, x_t, a_t, r_t, y_t, \theta_t)  \\ \cdot \mathbb{P}( y_t = y, r_t = r | \tilde{Q_{t}^{m}}, x_t, a_t, \theta_t)
    \end{split}
    \end{equation*}
    	
    The independence of $F$ on $\theta$ still guarantees that the first term is independent of $\theta$. However, $a_t \in A_i$ so $(r_t, y_t)$ are not independent of $\theta_t$ conditionally on $(x_t, a_t)$ as it was the case for joint action learners because interruptions of other agents can change the joint action. The independence on $\theta$ of the second term is given by Lemma~\ref{lemma_transition_proba}.
    \end{proof}
\end{document}